\newtheorem{proposition}{Proposition}
\DeclareMathOperator*{\argmin}{argmin}
\DeclareMathOperator*{\argmax}{argmax}
\DeclareMathOperator{\real}{\mathbb{R}}
\DeclarePairedDelimiter\norm{\lVert}{\rVert}
\DeclareMathOperator{\E}{\mathbb{E}}
\DeclareMathOperator{\defeq}{\dot{=}}
\icmltitlerunning{Understanding the Pathologies of Approximate Policy Evaluation when Combined with Greedification}
\begin{document}
\setlength{\textfloatsep}{10pt}

\twocolumn[
\icmltitle{Understanding the Pathologies of Approximate Policy Evaluation\\
when Combined with Greedification in Reinforcement Learning}



\icmlsetsymbol{equal}{*}

\begin{icmlauthorlist}
\icmlauthor{Kenny Young}{alb}
\icmlauthor{Richard S. Sutton}{alb}
\end{icmlauthorlist}

\icmlaffiliation{alb}{Department of Computing Science, University of Alberta, Edmonton, Canada}

\icmlcorrespondingauthor{Kenny Young}{kjyoung@ualberta.ca}

\icmlkeywords{Machine Learning, ICML}

\vskip 0.3in
]



\printAffiliationsAndNotice{}  
\begin{abstract}
Despite empirical success, the theory of reinforcement learning (RL) with value function approximation remains fundamentally incomplete. Prior work has identified a variety of pathological behaviours that arise in RL algorithms that combine approximate on-policy evaluation and greedification. One prominent example is policy oscillation, wherein an algorithm may cycle indefinitely between policies, rather than converging to a fixed point. What is not well understood however is the quality of the policies in the region of oscillation. In this paper we present simple examples illustrating that in addition to policy oscillation and multiple fixed points--- the same basic issue can lead to convergence to the worst possible policy for a given approximation. Such behaviours can arise when algorithms optimize evaluation accuracy weighted by the distribution of states that occur under the current policy, but greedify based on the value of states which are rare or nonexistent under this distribution. This means the values used for greedification are unreliable and can steer the policy in undesirable directions. Our observation that this can lead to the worst possible policy shows that in a general sense such algorithms are unreliable. The existence of such examples helps to narrow the kind of theoretical guarantees that are possible and the kind of algorithmic ideas that are likely to be helpful. We demonstrate analytically and experimentally that such pathological behaviours can impact a wide range of RL and dynamic programming algorithms; such behaviours can arise both with and without bootstrapping, and with linear function approximation as well as with more complex parameterized functions like neural networks.
\end{abstract}
\section{Introduction}
Despite significant empirical success, value-function based algorithms for reinforcement learning (RL) control remain poorly understood theoretically. When states are represented exactly, Q-learning and Sarsa are known to converge to the optimal action-value function under reasonable conditions~\citep{tsitsiklis1994asynchronous, jaakkola1994convergence, singh2000convergence}. Unfortunately, if the action-value or state-value functions are represented by function approximation, even the relatively simple case of linear function approximation, then the picture becomes less clear and pathological behaviours can result.

\citet{gordon1996chattering} first observed that Sarsa with an $\epsilon$-greedy policy can oscillate indefinitely, failing to converge. \citet{de2000existence}, generalized this to the dynamic programming (DP) setting, by showing that approximate value iteration (AVI) can also oscillate indefinitely and may not possess fixed points. \citet{bertsekas2011approximate} further highlights that policy oscillation is a general property of a wide class of DP algorithms which utilize on-policy evaluation combined with greedification (see also~\citet{bertsekas2010pathologies}). 

What is not well understood however is to what extent such policy oscillation is a problem. For example, \citet{gordon2001reinforcement} demonstrates that though SARSA(0) can oscillate between policies, the weights nonetheless converge to a bounded region, but gives little insight into the quality of policies in this region. \citet{bertsekas2011approximate} states that ``the full ramifications of policy oscillation in practice are
not fully understood at present, but it is clear that they give serious reason for concern''. In this report we demonstrate that the same basic mechanism that leads to policy oscillations in a wide range of RL and DP algorithms can also cause convergence to the worst possible policy. The basic problem is that while the evaluation accuracy is always optimized for the distribution of states that arise under the current policy, greedification may be based on states which are nonexistent or rare under this distribution. Thus, the values used for greedification are unreliable. In addition to policy oscillation we show that even in cases where the policy does converge reliably, the final policy can be arbitrarily suboptimal. This can occur even when the optimal policy is representable in the class of greedy policies, and the on-policy evaluation error of the optimal policy can be made zero.




Our result on convergence to a suboptimal policy is related to a result by \citet{pendrith1998analysis}. They show that although the optimal representable policy is always a fixed point of Monte Carlo (MC) policy-gradient, the same is not true when the policy-gradient is estimated based on a value function optimized by TD($\lambda$). We highlight that this result is a consequence a general issue with alternating between evaluating and greedifying under function approximation and can occur even if the policy is updated using a finite look-ahead on a value function trained on full MC returns. Thus, the issue is not exclusive to evaluation methods which bootstrap, like TD($\lambda$) .

The work of~\citet{lu2018non} is also closely related. They highlight the inability of Q-learning and AVI to converge to the optimal representable policy due to what they refer to as \textit{delusional bias}. Delusional bias refers to bootstrapping values based on sequences of actions which are not mutually realizable under a greedy policy in the function approximation class. We can reframe this definition by saying that optimizing approximate action-values under the current policy can modify the approximated values of other actions such that, after greedifying, the learned action-values are no longer optimized for the new policy. Thus, delusional bias is closely related to the issues we explore here. We take a somewhat different perspective on the problem, which enables us to highlight how similar problems can arises even in the absence of bootstrapping.


\section{Background}
RL refers to the problem faced by a goal directed agent interacting with an initially unknown environment. We formalize this interaction as a Markov decision process (MDP). An MDP $\mathcal{M}$ is defined by a tuple $\mathcal{M}=(\mathcal{S},\mathcal{A},p)$. At each time $t$ an agent observes a state $S_t\in\mathcal{S}$ and based on this information selects an action $A_t\in\mathcal{A}$. Based on this action and the current state, the environment then transitions to a new state $S_{t+1}\in\mathcal{S}$, and outputs a reward $R_{t+1}\in\real$ according to the dynamics function $p(s^\prime,r|s,a)=P(S_{t+1}=s^\prime,R_{t+1}=r|S_t=s,A_t=a)$.

The agent's behaviour is specified by a policy $\pi(a|s)$, which is a distribution over $a\in\mathcal{A}$ for each $s\in\mathcal{S}$. We focus on the episodic setting, where we assume that for every possible policy the agent environment interaction eventually reaches a terminal state from which no more reward is possible. Our main conclusions can also be generalized to the cases of discounted return and average reward. The termination time $T$ is the random time at which the terminal state is reached. 

The sum of rewards obtained from some time $t$ until $T$ is called the return $G_t=\sum\limits_{k=t}^{T-1}R_{k+1}$. The agent's goal is to learn, from experience, a policy which maximizes the total expected return $G_0$. We define the state-value function under a particular policy $\pi$ as $v_\pi(s)=\E_{\pi}[G_t|S_t=s]$, that is the expected return from time $t$ given the agent starts in state $s$ and follows the policy $\pi$.


We use $\mu_\pi(s)$ to represent the visitation frequency of a state $s$ under policy $\pi$. We refer to $\mu_\pi(s)$ as the on-policy distribution.  In the continuing case, $\mu_\pi(s)$ would be the steady-state distribution. In the episodic case we consider, we take it to be the expected fraction of total time-steps spent in the state across all episodes, that is:
\begin{equation*}
    \mu_\pi(s)=\left.\E_\pi\left[\sum\limits_{t=0}^{T-1}\mathds{1}(S_t=s)\right]\middle/\E_\pi[T]\right.,
\end{equation*}
where $T$ is the random time at which the terminal state is reached and expectations are taken over episodes. In both the episodic and continuing case $\mu_\pi(s)$ represents the distribution of states visited by an online RL agent running a fixed policy $\pi$. As such, it is also the distribution implicitly used in the objective of most standard on-policy RL algorithms.

We will also discuss the DP setting. DP is similar to RL except that we assume the agent has access to the dynamics function $p$. In DP the goal is to compute a good policy directly from $p$ rather than learning from experience.

\section{The Issue with Greedification based on Approximate Evaluation}\label{counterexamples}
There is a basic issue with algorithms that alternate between on-policy evaluation with function approximation and greedification of the policy based on the resulting evaluation. The issue is that when an approximate value function is optimized to be accurate with respect to a particular on-policy distribution, the resulting evaluation of states which are rarely visited under the current policy can be essentially arbitrary depending on the function class. Yet, when we greedify the policy, we frequently rely on the evaluation of such states. This issue is present even in the absence of bootstrapping and affects a wide class of RL and DP algorithms that use approximate value functions for policy improvement. To illustrate how fundamental this issue is, we first describe how it occurs in a DP algorithm, without stochasticity, or bootstrapping. We refer to this algorithm as approximate policy iteration (API), its pseudocode is provided in Algorithm \ref{state_API}. API learns an approximate state-value function $\hat{v}(\cdot;\theta): \mathcal{S}\rightarrow\real$, which should be seen as a learned approximation to $v_\pi(s)$, as an intermediate step to learning a good policy. API alternates between optimizing $\hat{v}(\cdot;\theta)$ for the current policy by minimizing $\sum\limits_{s}\mu_\pi(s)\left(\hat{v}(s;\theta)-v_\pi(s)\right)^2$, and updating the policy to be greedy\footnote{Assume ties are broken in some canonical manner.} with respect to a one-step look-ahead on the current value function approximation. This seemingly simple algorithm displays a variety of pathological behaviours depending on the way the function approximator generalizes.
\begin{algorithm}[t]
\caption{Approximate Policy Iteration}
\label{state_API}
\begin{algorithmic}[1]
\FUNCTION{API($\pi_0,\hat{v}$)}
\STATE$\pi\leftarrow \pi_0$
\WHILE{Not Converged}
\STATE $\theta\leftarrow\argmin\limits_{\hat{v}(\cdot;\theta^\prime):\theta^\prime\in\real^n}\sum\limits_{s}\mu_\pi(s)\left(\hat{v}(s;\theta^\prime)-v_\pi(s)\right)^2$ \label{evaluation_step}
\STATE $\pi\leftarrow \text{Greedy}(\hat{v},\theta)$
\ENDWHILE
\ENDFUNCTION
\FUNCTION{Greedy($\hat{v},\theta$)}
\FOR{$s\in\mathcal{S}$}
\STATE $\hat{q}(s,a)\defeq\E_\pi[
R_{t+1}+\hat{v}(S_{t+1};\theta)|S_t=s,A_t=a]$
\STATE $\pi(s,a)\leftarrow\begin{cases}
1 &\text{for }a=\argmax\limits_{a^\prime}\hat{q}(s,a)\\
0 &\text{otherwise}
\end{cases}$
\ENDFOR
\STATE \textbf{return $\pi$}
\ENDFUNCTION
\end{algorithmic}
\end{algorithm}

Our analysis will use the idea of fixed points of API, and later AVI. By fixed point, we mean an approximate value function $\hat{v}(\cdot;\theta)$, such that $\hat{v}(s;\theta)$ remains unchanged for all $s$ after applying the greedificiation and evaluation steps of the algorithm.

This report focuses specifically on on-policy evaluation. More precisely, when the evaluation is optimized by minimizing a sum of per-state errors weighted by the on-policy distribution of the current policy. The issues we highlight can perhaps be mitigated by focusing on optimizing the evaluation with respect to a distribution which is fixed throughout training (see for example the work of~\citet{maei2010toward}). However, such methods are limiting as the achievable performance can be limited by the fixed behaviour policy used. Optimizing evaluation with respect to the on-policy distribution allows the behaviour policy to be incrementally updated as an agent learns from experience. For this reason, on-policy evaluation is a crucially important case which is worth understanding better. 

It is also possible to derive performance guarantees if we assume the function approximator is such that, for any policy, the value function approximation error for policy-evaluation is uniformly bounded over \textit{all} states (see chapter 6.2 of~\cite{bertsekas1996neuro}). However, as noted by~\citet{munos2003error}, most practical algorithms do not allow us to bound the maximum norm over all states. \citet{munos2003error} also identifies certain conditions under which meaningful error bounds are possible in the more realistic case where we assume a bound on the error under some weighted quadratic norm. Here, we highlight how pathological behaviours, including convergence to the worst possible policy, can arise even when on-policy evaluation error (the quadratic norm weighted by the on-policy distribution) can be made zero for each deterministic policy.

We begin by presenting three intuitive counterexamples using state-aggregation for function approximation. Note that state-aggregation is a subclass of linear function approximation. In each example there are two possible actions $\mathcal{A}=\{l,r\}$ which only impact the state transition in the initial state, where they take the agent along the left or right path respectively. The three examples differ only in the order in which rewards are presented along the two possible paths. The differing placement of rewards leads to three different behaviours of API by modifying how optimizing the evaluation accuracy under each of two possible deterministic policies affects the evaluation of the other. We will use $\pi_l$ and $\pi_r$ to refer to the two possible deterministic policies based on which of the two actions is selected in the initial state. The three counterexamples are illustrated in Figure \ref{state_counter_examples}. We will discuss each counterexample in detail to see how it leads to a certain pathological behaviour. 

\begin{figure*}[t]
\centering
\begin{subfigure}[t]{0.25\textwidth}
\includegraphics[width=\textwidth]{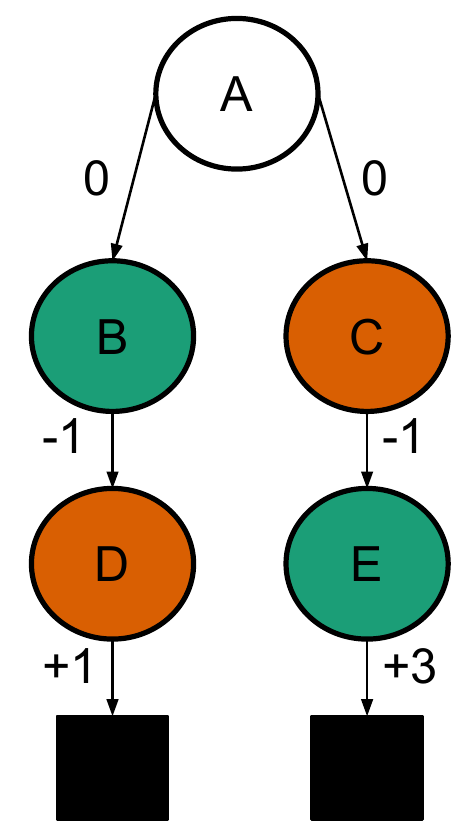}
\caption{Oscillating example: API has no fixed point.}
\label{SCE_oscillatory}
\end{subfigure}
\hfill
\begin{subfigure}[t]{0.25\textwidth}
\includegraphics[width=\textwidth]{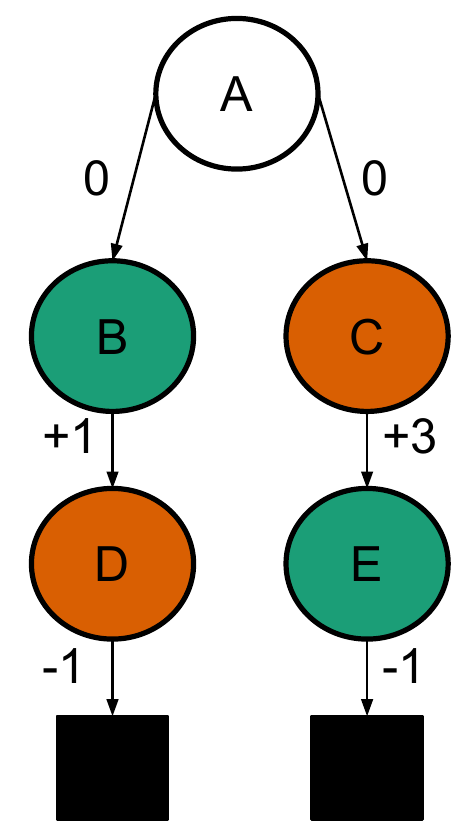}
\caption{Multiple example: API has multiple fixed points.}
\label{SCE_multi}
\end{subfigure}
\hfill
\begin{subfigure}[t]{0.25\textwidth}
\includegraphics[width=\textwidth]{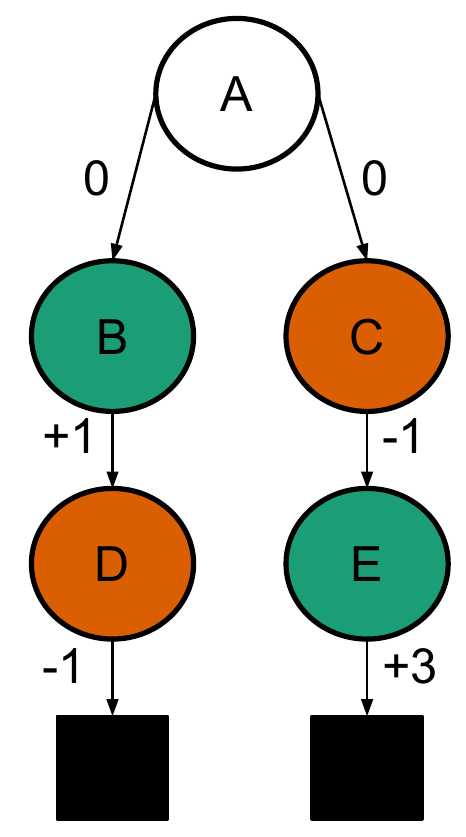}
\caption{Worst-case example: API converges to the inferior policy.}\label{SCE_bad}
\end{subfigure}
\caption{Counterexamples which lead to pathological behaviours in approximate policy iteration. Circles are states, arrows are deterministic transitions associated with particular actions. Numbers next to arrows are rewards. States of the same color are aggregated under the function approximator. In each example the right action in \textsf{A} is always optimal with a value of $V_{\pi_r}(A)=2$ compared to $V_{\pi_l}(A)=0$. The examples differ only in the order the rewards are presented along each path, yet exhibit three different pathologies of API.}
\label{state_counter_examples}
\end{figure*}

\subsection*{Oscillating Counterexample}
In the first counterexample, illustrated in Figure \ref{SCE_oscillatory} , API has no fixed point. $\pi_l$ looks superior under the optimal approximate state-value function for $\pi_r$ and visa-versa. This leads to API oscillating between the two policies.
To see this, first consider the approximate state-value function obtained by the evaluation step of API (line 4) when the policy is set to $\pi_l$. Since $\mu_{\pi_l}(\textsf{C})=\mu_{\pi_l}(\textsf{E})=0$, only the value of \textsf{B} and \textsf{D} will impact the evaluation error. The true values are $v_\pi(\textsf{B})=0$ and $v_\pi(\textsf{D})=1$. The optimal approximate state-value function will match these values exactly. Due to state aliasing, the optimal $\theta$ will correspond to $\hat{v}(\textsf{C};\theta)=1$ and $\hat{v}(\textsf{E};\theta)=0$. Now when we greedify the policy at $\textsf{A}$ we see a value of $1$ for the right action and $0$ for the left action. Thus the policy switches to $\pi_r$ in the greedification step (line 5). Optimizing the state-value function for $\pi_r$ then leads to $\hat{v}(\textsf{B};\theta)=3$ and $\hat{v}(\textsf{C};\theta)=2$. The next greedification step will switch back to $\pi_l$ and so on. 

When the policy is initialized to an arbitrary stochastic policy, greedification will take it to one of the two deterministic policies, which one will depend on the relative probability of the left and right actions. Thereafter, it will oscillate as described above. Towards making this more precise, let $\rho=\pi(r|\textsf{A})$, $\hat{v}(\textsf{A};\theta)=\theta_0$, $\hat{v}(\textsf{B};\theta)=\hat{v}(\textsf{E};\theta)=\theta_1$ and $\hat{v}(\textsf{C};\theta)=\hat{v}(\textsf{D};\theta)=\theta_2$, where $\theta_i$ is the $i_{th}$ element of the parameter vector $\theta$. For compactness define $\theta^{\star}_{\pi}=\argmin\limits_{\theta^{\prime}}\sum\limits_{s}\mu_\pi(s)\left(\hat{v}(s;\theta^{\prime})-v_\pi(s)\right)^2$. It is straightforward to show that:
\begin{equation}\label{optimal_theta}
    \begin{multlined}[b]\theta^{\star}_{\pi}
    =[\rho v_\pi(\textsf{C})+(1-\rho)v_\pi(\textsf{B}),\rho v_\pi(\textsf{E})+\\(1-\rho)v_\pi(\textsf{B}),\rho v_\pi(\textsf{C})+(1-\rho)v_\pi(\textsf{D})]
    \end{multlined}.
\end{equation}
Equation~\ref{optimal_theta} will also hold for the other two counterexamples to be outlined below. The true values of states \textsf{B}, \textsf{C}, \textsf{D}, and \textsf{E} are policy independent, replacing each with their appropriate numerical value in Equation~\ref{optimal_theta} gives us:
\begin{equation*}
\theta^{\star}_{\pi}=\left[2\rho,3\rho,1+\rho\right].
\end{equation*}
From here we can see that starting from a stochastic initial policy, evaluation followed by greedification will produce $\pi_l$ whenever $\rho=\pi(r|\textsf{A})>\frac{1}{2}$ and $\pi_r$ when $\rho< \frac{1}{2}$.

\subsection*{Multiple Fixed Point Counterexample}
In the second counterexample, illustrated in Figure \ref{SCE_multi}, API has multiple fixed points. $\pi_l$ looks superior under it's own optimal approximate state-value function and $\pi_r$ looks superior under it's own. Thus, both value functions will be fixed points. Which fixed point API converges to depends only on the initial policy.

When the state-value function is optimized for $\pi_l$ we have optimal values of $\hat{v}(\textsf{B};\theta)=0$ and $\hat{v}(\textsf{C};\theta)=-1$. Since $\hat{v}(\textsf{B};\theta)>\hat{v}(\textsf{C};\theta)$, $\pi_l$ is the greedy policy and the greedification step of API will leave it unchanged. On the other hand, when the evaluation is optimized for $\pi_r$ we have $\hat{v}(\textsf{B};\theta)=-1$ and $\hat{v}(\textsf{C};\theta)=2$, thus again the greedification step will leave the policy unchanged.  Substituting state values into Equation~\ref{optimal_theta} gives:
\begin{equation*}
\theta^{\star}_{\pi}=\left[2\rho,-\rho,3\rho-1\right].
\end{equation*}
Hence, from an arbitrary policy $\pi$, evaluation followed by greedification will produce $\pi_l$ whenever $\rho\leq\frac{1}{4}$ and $\pi_r$ otherwise. So, when the policy is initialized arbitrarily, the first greedification step will take it to one of the two deterministic policies, and it will remain there forever.

\subsection*{Worst-Case Counterexample}
In the final counterexample, illustrated in Figure \ref{SCE_bad}, API has a single fixed point, but it corresponds to the inferior of the two deterministic policies. $\pi_l$ looks superior under it's own optimal approximate state-value functions as well as that of $\pi_r$. However, $\pi_r$ is in fact the optimal policy. Thus, API will converge to the worse of the two possible deterministic policies regardless of initialization. More precisely:
\begin{equation*}
\theta^{\star}_{\pi}=\left[2\rho,3\rho,3\rho-1\right].
\end{equation*}
So from an arbitrary stochastic policy $\pi$, evaluation followed by greedification will produce $\pi_l$ whenever $3\rho\geq3\rho-1$. This is always satisfied, so API will indeed converge to the suboptimal policy from any initial policy in a single round of evaluation and greedification. This occurs despite the fact that the optimal policy is representable as a greedy policy in the class of approximate value functions, and the on-policy evaluation error of the optimal policy can be made zero.

\section{How General are these Behaviours?}
To highlight that the pathologies in section~\ref{counterexamples} are not unique to bootstrapping we introduced them within the API algorithm. In this section we explain how these examples lead to the same behaviours when bootstrapping is used, and in the RL setting.

We define the Bellman operator $B_\pi$ by $B_{\pi}v(s)=\E_\pi[
R_{t+1}+v(S_{t+1})|S_t=s]$ for every function $v:\mathcal{S}\rightarrow\real$. We similarly define the Bellman optimality operator $B^\star$ by $B^\star v(s)=\max\limits_a \E_\pi[R_{t+1}+v(S_{t+1})|S_t=s,A_t=a]$. For any policy $\pi$, $v_\pi(s)$ is the unique solution to the Bellman equation $v_{\pi}=B_\pi v_{\pi}$.

Consider the AVI algorithm in Algorithm~\ref{AVI}, which updates the approximate value function to minimize the error relative to $B^\star \hat{v}(s;\theta)$. Proposition~\ref{API_to_AVI} shows that AVI will behave similarly to API on each of our counterexamples.

\begin{proposition}\label{API_to_AVI}
For a given function approximator $\hat{v}$ and a fixed MDP $\mathcal{M}$, let it be the case that for any deterministic policy $\pi$ there exists a unique parameter setting $\theta^{*}_\pi$ such that $\sum\limits_s\mu_\pi(s)\left(\hat{v}(s;\theta^{*}_\pi)-v_\pi(s)\right)^2=0$. Then:

(i) Every fixed point of API is also a fixed point of AVI.

(ii) If $\hat{v}(s;\theta)=\theta\cdot x(s)$ is a linear function of some features $x(s)\in\real^n$ of the state $s$, then every fixed point of AVI is also a fixed point of API
\end{proposition}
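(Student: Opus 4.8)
The plan is to reduce both directions to statements about value functions on the support of $\mu_\pi$, using two ingredients: the uniqueness hypothesis, which makes ``$\hat{v}(\cdot;\theta)$ is an API fixed point'' synonymous with ``the $\mu_\pi$-weighted evaluation error of $\hat{v}(\cdot;\theta)$ is already zero''; and a closure property of the on-policy distribution. First I would note that the greedification step uses $\hat{q}(s,a)=\E_\pi[R_{t+1}+\hat{v}(S_{t+1};\theta)\mid S_t=s,A_t=a]$, which, after conditioning on $A_t=a$, does not depend on $\pi$ at all; hence API and AVI produce the \emph{same} greedy policy $\pi$ from a given $\hat{v}(\cdot;\theta)$. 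By the hypothesis $\theta^{*}_\pi$ is the \emph{only} parameter achieving zero $\mu_\pi$-weighted error, so $\hat{v}(\cdot;\theta)$ is a fixed point of API if and only if $\theta=\theta^{*}_\pi$ for this $\pi$, i.e.\ if and only if $\hat{v}(s;\theta)=v_\pi(s)$ for every $s$ with $\mu_\pi(s)>0$. The closure property I would record is: if $\mu_\pi(s)>0$ and $\pi$ is deterministic, every successor of $s$ under $\pi$ is either terminal (value $0$ by convention) or again has positive $\mu_\pi$; this is immediate from the episodic definition of $\mu_\pi$. A consequence is that for $s$ in the support of $\mu_\pi$, $B_\pi u(s)$ depends on $u$ only through its restriction to the support, and $B_\pi v_\pi=v_\pi$ there.

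For part (i): if $\hat{v}(\cdot;\theta)$ is an API fixed point then $\hat{v}(\cdot;\theta)=v_\pi$ on the support, so by the closure property $B_\pi\hat{v}(s;\theta)=B_\pi v_\pi(s)=v_\pi(s)=\hat{v}(s;\theta)$ for all such $s$. Thus the AVI regression target $B^{\star}\hat{v}(\cdot;\theta)=B_\pi\hat{v}(\cdot;\theta)$ (the two agree since $\pi$ is greedy) coincides $\mu_\pi$-almost everywhere with $v_\pi$, so the $\mu_\pi$-weighted AVI evaluation objective and the API evaluation objective are \emph{the same function} of the parameters; its unique minimiser is $\theta^{*}_\pi=\theta$, and AVI leaves $\hat{v}(\cdot;\theta)$ unchanged. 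This direction uses the uniqueness hypothesis but not linearity.

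For part (ii), write $\hat{v}=X\theta$ with feature matrix $X$ and $D=\mathrm{diag}(\mu_\pi)$. The AVI fixed-point condition forces the $D$-weighted least-squares normal equations, i.e.\ the Bellman residual $e=X\theta-B_\pi X\theta$ is $D$-orthogonal to $\mathrm{col}(X)$. Setting $g=X\theta-X\theta^{*}_\pi\in\mathrm{col}(X)$ (so $g=\hat{v}-v_\pi$ on the support) and using the closure property to compute $e=(I-P_\pi)g$ on the support, where $P_\pi$ is the substochastic on-policy transition matrix, $D$-orthogonality of $e$ to $g$ gives $\norm{g}_D^2=\langle P_\pi g,g\rangle_D$. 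I would then invoke the on-policy non-expansion $\norm{P_\pi g}_D\le\norm{g}_D$, which follows from $\mu_\pi^{\top}P_\pi\le\mu_\pi^{\top}$ (a property of the on-policy visitation distribution), together with the identity $\norm{P_\pi g-g}_D^2=\norm{P_\pi g}_D^2-2\langle P_\pi g,g\rangle_D+\norm{g}_D^2\le 0$, to conclude $P_\pi g=g$ on the support. Iterating and using that every policy terminates (so $P_\pi^{k}\to 0$) forces $g=0$ on the support, i.e.\ $\hat{v}(s;\theta)=v_\pi(s)$ whenever $\mu_\pi(s)>0$; hence $\theta=\theta^{*}_\pi$ by uniqueness, and greedify-then-evaluate under API returns $\theta$, so $\hat{v}(\cdot;\theta)$ is an API fixed point.

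I expect the main obstacle to be part (ii): pinning down the exact normal-equation characterisation of an AVI fixed point when the $D$-weighted norm is only a semi-norm (so targets are constrained only on the support and $X$ may be rank-deficient there), and carefully justifying, in the episodic setting, the two facts doing the real work --- non-expansiveness of $P_\pi$ in $\norm{\cdot}_D$ and $P_\pi^{k}\to 0$ for proper policies --- which are what collapse $g$ to zero. Part (i), by contrast, is essentially bookkeeping once the closure property and the ``target equals $v_\pi$ on the support'' observation are in hand.
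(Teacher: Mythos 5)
Your proposal is correct. Part (i) follows essentially the paper's route: by the uniqueness hypothesis an API fixed point has zero on-policy evaluation error, hence agrees with $v_\pi$ on the support of $\mu_\pi$, so the AVI target $B^\star\hat v=B_\pi\hat v$ also agrees with $v_\pi$ there and the two evaluation objectives coincide; your explicit closure property (successors under $\pi$ of support states remain in the support or terminate) is exactly what is needed to justify substituting $\hat v$ for $v_\pi$ inside the Bellman operator, a step the paper's chain of equalities leaves implicit. Part (ii) reaches the same intermediate conclusion as the paper---any AVI fixed point must coincide with $\hat v(\cdot;\theta^{*}_\pi)$, which is then an API fixed point by uniqueness---but by a different mechanism. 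The paper restricts attention to the sub-MDP supported on $\mu_\pi$ and cites the Tsitsiklis--Van Roy result that the projected Bellman operator $\Omega_\pi$ is a contraction with a unique fixed point, of which both $\hat v(\cdot;\tilde\theta)$ and $\hat v(\cdot;\theta^{*}_\pi)$ are fixed points. You instead unpack the AVI fixed-point condition into the $D$-weighted normal equations and combine $D$-orthogonality of the Bellman residual with the non-expansion $\norm{P_\pi g}_D\le\norm{g}_D$ and properness ($P_\pi^{k}\to 0$) to force $g=\hat v-v_\pi$ to vanish on the support. Your version is self-contained and tailored to the undiscounted episodic setting, where the cited contraction result (stated for discounted steady-state distributions) requires exactly the adaptation you carry out; the paper's version is shorter but leans on the external result and the sub-MDP construction to make it applicable. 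The two facts you flag as the main obstacles do both hold here: $\eta_\pi^{\top}=d_0^{\top}+\eta_\pi^{\top}P_\pi$ gives $\mu_\pi^{\top}P_\pi\le\mu_\pi^{\top}$ and hence the non-expansion, and the standing assumption that every policy terminates gives $P_\pi^{k}\to 0$ on the support, so your argument closes.
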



\begin{proof}
(i) Any fixed point $\hat{v}(\cdot;\tilde{\theta})$ of API must be a minimizer of $\sum\limits_{s}\mu_\pi(s)\left(\hat{v}(s;\theta)-v_\pi(s)\right)^2$ under the deterministic policy $\pi=\text{Greedy}(\hat{v},\tilde{\theta})$. By assumption this minimum error must be zero. The only way this can be true is if $\mu_\pi(s)=0$ for all $s$ such that $\hat{v}(s;\tilde{\theta})\neq v_\pi(s)$. Hence:
\begin{align*}
    0&=\sum\limits_{s}\mu_\pi(s)\left(v_\pi(s)-B_\pi v_\pi(s)\right)^2\\
    &=\sum\limits_{s}\mu_\pi(s)\left(v_\pi(s)-B^\star v_\pi(s)\right)^2\\
    &=\sum\limits_{s}\mu_\pi(s)(v_\pi(s)-B^\star \hat{v}(s;\tilde{\theta}))^2,
\end{align*}
these steps follow respectively from the Bellman equation, the fact that $\pi$ must be the greedy with respect to $\hat{v}(s;\tilde{\theta})$ at the API fixed point, and the fact that $\mu_\pi(s)=0$ for every $s$ such that $\hat{v}(s;\tilde{\theta})\neq v_\pi(s)$. This means $\hat{v}(\cdot;\tilde{\theta})$ is a fixed point of the evaluation step of AVI, since the error cannot be reduced below zero. The greedification step will leave $\pi$ unchanged as the policy is already greedy. So $\hat{v}(\cdot;\tilde{\theta})$ is also a fixed point of AVI.

(ii) Any fixed point $\hat{v}(\cdot;\tilde{\theta})$ of AVI must be a minimizer of $\sum\limits_{s}\mu_\pi(s)(\hat{v}(\cdot;\theta)-B_{\pi} \hat{v}(s;\tilde{\theta}))^2$ under the deterministic policy $\pi=\text{Greedy}(\hat{v},\tilde{\theta})$. We have used the fact that $\pi$ must be greedy at a fixed point to replace $B^{\star}$ with $B_{\pi}$. 

Consider a new MDP $\mathcal{M}^\prime$ identical to $\mathcal{M}$ except with state space $\mathcal{S}^\prime$ consisting of those states with $\mu(s)\neq0$, and all transitions to states with $\mu(s)=0$ replaced with transitions to the terminal state. Notice that $v_\pi(s)$ for all $s\in\mathcal{S}^\prime$ is the same for $\mathcal{M}^\prime$ as it was for $\mathcal{M}$. Now consider the operator $\Omega_\pi$ defined by $\Omega_\pi v=\argmin\limits_{\hat{v}(\cdot;\theta^\prime):\theta^\prime\in\real^n}\sum\limits_{s\in\mathcal{S}^\prime}\mu_\pi(s)(\hat{v}(s;\theta^\prime)-B_{\pi}v)^2$ representing the evaluation step of AVI for which we know $\hat{v}(\cdot;\tilde{\theta})$ is a fixed point. For linear $\hat{v}(\cdot;\theta)$, $\Omega_\pi$ is a contraction with respect to the norm $\norm{v}=\sum\limits_{s\in\mathcal{S}^\prime}\mu_\pi(s)v(s)^2$, with a unique fixed point (see for example~\citet{tsitsiklis1997analysis}).

Next, note that $\hat{v}(\cdot;\theta^{*}_\pi)$ is a fixed point of $\Omega_\pi$. To see this note that, by assumption, $\hat{v}(s;\theta^{*}_\pi)=v_\pi(s)$ for all $s\in\mathcal{S}^\prime$, thus $B_{\pi}\hat{v}(s;\theta^{*}_\pi)=\hat{v}(s;\theta^{*}_\pi)$. Hence, we can conclude that $\hat{v}(\cdot;\tilde{\theta})=\hat{v}(\cdot;\theta^{*}_\pi)$ is the unique fixed point of $\Omega_\pi$. We know $\hat{v}(\cdot;\theta^{*}_\pi)$ minimizes the evaluation step of API. This and the fact that the policy is greedy, suffices to show that $\hat{v}(\cdot;\tilde{\theta})$ is a fixed point of API as well as AVI.
\end{proof}
\begin{algorithm}[t]
\caption{Approximate Value Iteration}
\label{AVI}
\begin{algorithmic}[1]
\FUNCTION{AVI($\pi_0,\hat{v},\theta_0$)}
\STATE $\pi,\theta\leftarrow \pi_0, \theta_0$
\WHILE{Not Converged}
\STATE $\theta\leftarrow\argmin\limits_{\hat{v}(\cdot;\theta^\prime):\theta^\prime\in\real^n}\sum\limits_{s}\mu_\pi(s)(\hat{v}(s;\theta^\prime)-B^\star \hat{v}(s;\theta))^2$
\STATE $\pi\leftarrow \text{Greedy}(\hat{v},\theta)$
\ENDWHILE
\ENDFUNCTION
\FUNCTION{Greedy($\hat{v},\theta$)}
\STATE $\hat{q}(s,a)\defeq\E_\pi[
R_{t+1}+\hat{v}(S_{t+1},\theta)|S_t=s,A_t=a]$
\FOR{$s\in\mathcal{S}$}
\STATE $\pi(s,a)\leftarrow\begin{cases}
1 &\text{for }a=\argmax\limits_{a^\prime}\hat{q}(s,a)\\
0 &\text{otherwise}
\end{cases}$
\ENDFOR
\STATE \textbf{return $\pi$}
\ENDFUNCTION
\end{algorithmic}
\end{algorithm}



Each of the counterexamples in Figure \ref{state_counter_examples} obey the assumption of Proposition \ref{API_to_AVI}, with a linear $\hat{v}$. Thus we can conclude that, for each counterexample, the fixed points of AVI are the same as those of API. So AVI has no fixed points on the oscillating counterexample, two fixed points on the multiple fixed point example, and one fixed point corresponding to the suboptimal policy on the worst-case counterexample. Proposition \ref{API_to_AVI} does not preclude AVI possessing additional cycles that are not possible for API or visa-versa. However, for our particular examples, one can indeed show that AVI will oscillate on the oscillating counterexample and converge to a fixed point on the other two examples.

One might hope that these issues would be less pronounced when moving from the full evaluation and greedification steps of API and AVI to the more incremental RL setting. One might suspect that if the policy and value function are more gradually adapted to each other, we won't see such pathological dynamics. However, we will demonstrate that, for the most part, the general issue illustrated in the counterexamples of Figure~\ref{counterexamples} still occur for standard RL algorithms that make use of approximate evaluation. 

 It's worth noting that MC policy-gradient (MC-PG), also known as REINFORCE~\citep{williams1992simple}, does not display the pathologies we highlight here, though~\citet{kakade2002approximately} have demonstrated that related issues with the on-policy distribution can lead to slow convergence even in the MC case. MC-PG makes updates according to an unbiased estimate of the gradient of a well defined objective, the expected return. MC-PG can have multiple local minima when using function approximation, but the optimal representable policy will always be the global minima, and therefore a fixed point. Assuming an appropriately annealed step-size, MC-PG cannot experience cycles or a repulsive optimal representable policy. However, there are many reasons to prefer value based methods, variance reduction being the most obvious, hence it's worth understanding the underlying issues.

To understand why more incremental RL algorithms based on approximate evaluation and greedification still lead to similar pathological behaviour, consider an actor-critic (AC) algorithm which concurrently trains a linear state-value function from MC returns, and a stochastic softmax policy which represents each state independently and estimates the policy-gradient using a one-step look-ahead on the value function. The value function in this case is continuously updated to reduce a stochastic approximation of the same evaluation error as the API algorithm. Meanwhile the policy is stochastically updated to be closer to the greedy policy of the current value function. Despite updating the policy and value more gradually than API, this procedure still maintains the same expected direction of the value and policy update at each point in parameter space. Thus, it's easy to see how the resulting dynamics are qualitatively similar to that of API on the counterexamples in Figure \ref{state_counter_examples}.

One notable behavioural difference arises when using algorithms with a continuous policy class, like the AC algorithm described above. In particular, it has been shown that the use of a continuous policy class suffices to guarantee the existence of fixed points, but not the uniqueness. Thus, on the oscillating example, algorithms which use a continuous policy class will tend to converge to some intermediate stochastic policy rather than oscillating between policies. See the work of~\citet{de2000existence} for an analysis of this point under a continuous policy variant of AVI and the work of~\citet{perkins2002existence} for an analysis of Q-learning and Sarsa with continuous policy classes. The proof in each case relies on Brouwer's Fixed Point 
Theorem~\citep{brouwer1911abbildung}, which states that any continuous function from a compact convex set onto itself must possess a fixed point. Continuous policy classes nonetheless behave similarly on the multiple fixed point and worst-case counterexamples.

\section{State-aggregation Experiments}
Instead of analytically assessing the performance of a particular class of RL algorithms on these problems, we illustrate the generality of these issues by running three RL algorithms on the proposed counterexamples and empirically demonstrate that similar behaviours result. The algorithms tested are Q-learning, AC with a MC critic, and AC with a TD($0$) critic.  The results are illustrated in Figure \ref{grid_plot}.
\begin{figure}[t]
\centering
\includegraphics[width=\columnwidth]{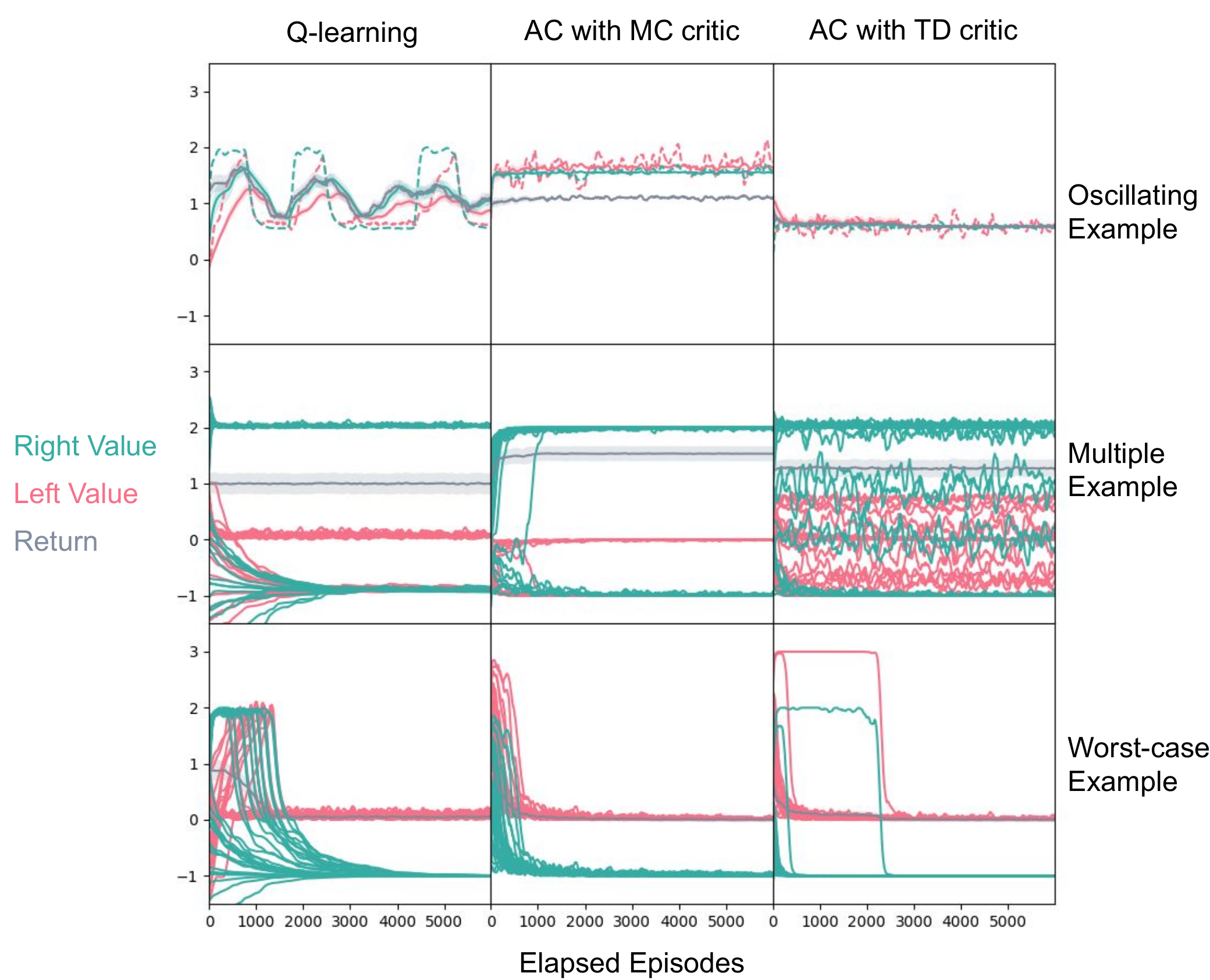}
\caption{Pathological behaviours due to approximate evaluation and greedification arise for a variety of RL algorithms. Columns show results for different algorithms, rows show results on different counterexamples. Each plot displays the result of 30 runs. Average return is plotted in grey with error bars displaying standard error. For Q-learning the action-value approximation for the left action is shown in green, while the action-value approximation of the right action is shown in red. For the two AC algorithms green and red instead indicate the approximate state-value of \textsf{B} and \textsf{C} respectively. For the oscillating counterexamples the solid lines represent the mean with error bars displaying standard error. The dotted line shows a single run to clearly illustrate the oscillating behaviour of Q-learning. For the other two counterexamples we simply display curves for each of the 30 runs.}
\label{grid_plot}
\end{figure}

In these experiments, we formulate each counterexample as having two actions available in each state, but the selected action only impacts the transition in state \textsf{A}. We represent aggregated states by running each algorithm with a tabular representation where aggregated states are treated as the same. Hence for Q-learning, we learn 3 sets of 2 action-values for state \textsf{A}, for state \textsf{B} and \textsf{E} and for state \textsf{C} and \textsf{D}. For AC we learn 3 state-values along with 3 softmax-paramaterized policies over 2 actions. A fixed, reasonably low step-size of $0.05$ is used in each case. For Q-learning we use $\epsilon$-greedy exploration with $\epsilon=0.05$. All parameters are randomly initialized from a unit normal distribution.

The results illustrate how different algorithms interact with the three counterexamples. Q-learning shows policy and value oscillation on the oscillating counterexample, while both AC algorithms converge to some stochastic policy where the value estimate of each state is approximately equal. All algorithms show two distinct fixed points on the multiple fixed point counterexample, though for AC with a TD critic the bootstrapping seems to lead to somewhat more chaotic behaviour, where some runs appear to converge to neither fixed point. All three algorithms consistently converge to the inferior policy in the worst-case counterexample. Interestingly, Q-learning initially tends toward the superior policy, before suddenly dropping off as the evaluation is optimized. 

\section{Neural Network Experiments}
We've shown how the interaction of approximate policy evaluation and greedification can lead to several pathological behaviours with linear function approximation. In this section we investigate how similar behaviours can arise with neural network function approximation. It might not be clear that with a universal function approximator like a neural network, the issues highlighted here would still be problematic. Given a network with large enough capacity, one might hope we will be able to learn the values of every state to sufficient accuracy that we can approach a good policy despite using approximate value functions for policy improvement. However, the case of a network with large enough capacity is not the only case of interest. As RL is applied to solve increasingly sophisticated problems, the worlds our agents act in will likely be much larger than the agents' representational capacity, though this is not the case in many of the domains that are commonly used for evaluation of deep RL algorithms. To give an example: ATARI 2600 games typically occupy 2-4kB~\citep{bellemare2013arcade}, while state of the art agents can use more than one million parameters~\citep{espeholt2018impala}, or more than 4000kB assuming single precision floats are used. 

When the agent lacks sufficient capacity to represent its whole world, function approximation becomes a real issue, and it is worth seriously thinking about how algorithms learn to apply limited function approximation resources. The basic issue we highlight in this report is not limited to simple function approximators like state-aggregation or linear function approximation. If the function class is not rich enough to represent all state-values accurately, then there may be a trade-off between evaluation accuracy under one on-policy distribution and another. Even in the overparameterized regime, similar effects could impact the quality of solutions obtained by RL algorithms, since the evaluation is never fully optimized in practice. In these cases, how the algorithm behaves will depend on how the function approximator generalizes.

The state representation in Figure \ref{NN_counter_examples} illustrates how similar issues can impact function approximation techniques like neural networks. This representation involves no explicit state aliasing, every state has an independent set of representations. However, the topology of the input space will force an underparameterized single hidden layer neural network with a sigmoid activation to face a trade-off similar to the counterexamples in Figure \ref{state_counter_examples}.
\begin{figure}[t]
\centering
\begin{subfigure}[t]{0.8\columnwidth}
\includegraphics[width=\textwidth]{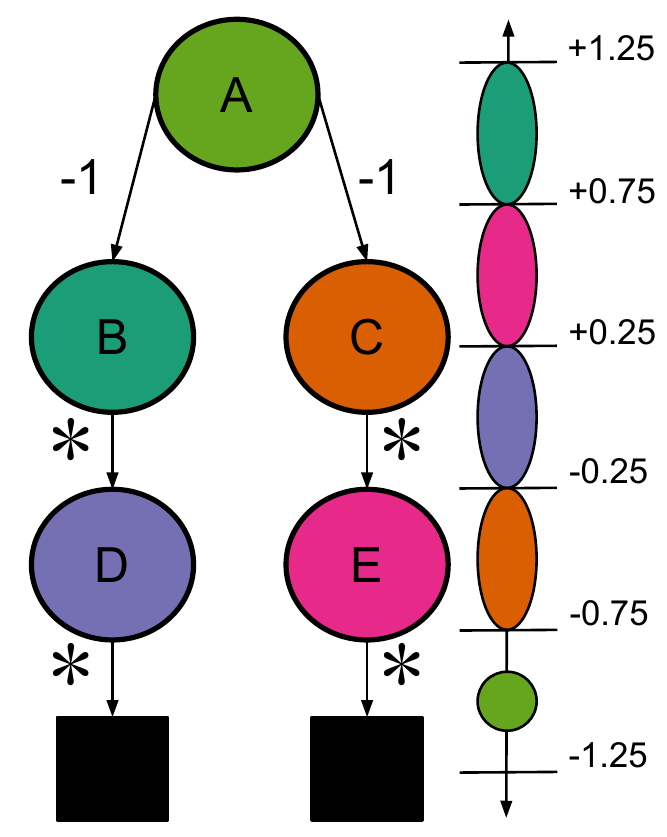}
\end{subfigure}
\caption{A state representation that illustrate how pathological behaviours due to approximated evaluation and greedification can arise even without state aliasing in an underparameterized neural network. The input representation is one dimensional with the representation of each state drawn uniformly at random on each visit, from the region marked in the corresponding color on the line on the right. In addition to changing the representation from state aliasing, each of the above examples adds an additional $-1$ reward after the first state. This was done to make the value of the start state distinct from the state that follows under a deterministic policy. This forces the function approximator to represent them separately to minimize the evaluation error. The other rewards marked with $*$ are identical to those in the associated counterexample in Figure~\ref{counterexamples}.}
\label{NN_counter_examples}
\end{figure}

Consider applying a single hidden layer neural network with linear threshold activations with the state representation in Figure ~\ref{NN_counter_examples}. If the network has just 2 hidden units it is underparameterized in the sense that it is unable to represent the value of each state independently. Such a network is limited to separating the input space into 3 intervals, each of which can have an independent value. This is, however, enough to exactly evaluate the three states along either one of the left and right paths. To exactly evaluate the states along the right path, the network must separate states \textsf{A}, \textsf{C} and \textsf{E}. Due to the topology of the representation, this means it must alias \textsf{E} and \textsf{B}. Similarly, to exactly evaluate the states along the left path it must alias \textsf{D} and \textsf{C}. Thus, the topology of the input space combined with the limitations of the function approximator leads to a situation similar to the examples given in Figure \ref{state_counter_examples}. 

Unlike the linear threshold function, a differentiable sigmoid activation can interpolate between states. To mitigate any advantage gained by doing so, we draw the representation of states from an interval, rather than a single deterministic point. To minimize evaluation error along either path the sigmoid hidden units must be close to flat within each of 2 colored intervals of Figure~\ref{NN_counter_examples}. Thus their behaviour should approximate that of linear thresholds when evaluation error is minimized. State \textsf{A} is still represented deterministically so that it's evaluation can be learned unambiguously to simplify interpretation of the results.

We trained Deep Q-network (DQN) agents on analogues to each of the problems in Figure~\ref{counterexamples} with the representation illustrated in Figure~\ref{NN_counter_examples}. We experimented with an overparameterized network with 4 sigmoid units to establish that strong performance was possible, after which we ran an underparameterized network with 2 sigmoid units to test whether we would observe similar behaviour to that illustrated in Figure~\ref{grid_plot} for the state-aggregation case.

In these experiments, we use experience replay with a batch size of 32. We use RMSProp~\citep{RMSPROP} for optimization. The step-size was selected from the set $\{0.0025\cdot2^{i}:i\in\{0,1,..,5\}\}$ to maximize performance after 500,000 episodes with the overparameterized network in the worst-case example, and the same step-size was then used for the other problems. We use $\epsilon$-greedy exploration with $\epsilon=0.1$. We also add a small amount of L2-regularization (a constant of $0.0001$) to keep the network parameters from growing inordinately large. All parameters are randomly initialized from a unit normal distribution.

The results of our DQN experiments, presented in Figure~\ref{grid_plot_NN} reflect some of the expected behaviour, though complicated by a number of subtleties. With 4 hidden units the agent is able to converge to good performance on each counterexample, though convergence was very slow on the worst-case example. With 2 hidden units we see that the oscillating example causes the action-values of the two actions to converge close to the same value, consistently causing policy oscillations and reduced expected return. For the multiple fixed point example many runs of the underparameterized agent initially approach an approximate value function corresponding to the inferior policy. However, in the long-run, every run was able to converge to the superior policy. It is unclear whether this represents a general characteristic of DQN which allows it to eventually escape from such inferior fixed points, or simply a quirk of our particular architecture and problem setting. On the worst-case example, the underparameterized agent usually converged to an evaluation corresponding to the inferior policy, though 7 out of 30 runs instead converged to a point where both actions had similar estimated values, resulting in policy oscillations.
\begin{figure}[t]
\centering
\begin{subfigure}[t]{\columnwidth}
\includegraphics[width=\textwidth]{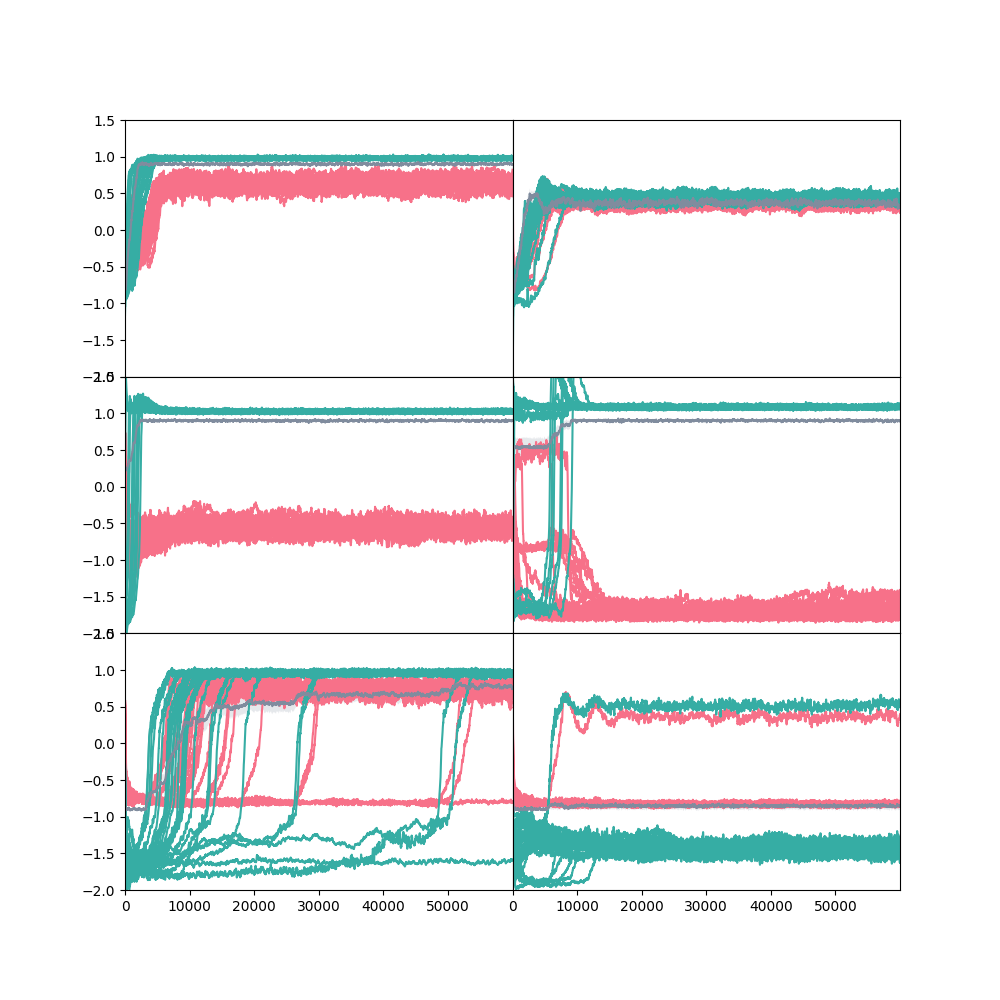}
\end{subfigure}
\caption{Pathological behaviours can arise with neural network function approximation. Columns show results for different numbers of hidden layer units, rows show results on different counterexamples. Each plot shows the evolution of the approximated values of the left and right actions, plotted for each of 30 runs individually. The average return is shown in grey with error bars corresponding to standard error.}
\label{grid_plot_NN}
\end{figure}

\section{Conclusion}
This report highlights a fundamental issue with RL methods that combine on-policy evaluation and greedification. We show for the first time how this issue can cause convergence to the worst possible policy, in addition to policy oscillation and multiple fixed points. By doing so, we hope to help direct the search for potential solutions to this issue.

Of the three possible behaviours highlighted in the section~\ref{counterexamples}, we see the worst-case counterexample as the most problematic. The existence of fixed points can be guaranteed by using a continuous policy class~\citep{de2000existence}. \citet{perkins2003convergent} further introduce a version of API that is guaranteed to converge to a unique fixed point, albeit under fairly restrictive conditions.

Our results show that even when a unique fixed point exists, it can correspond to an arbitrarily bad policy. Even when the optimal policy can be represented and evaluated exactly (in the sense of on-policy evaluation error), it may be repulsive. We consider this a problem with algorithms that combine on-policy approximate policy evaluation and greedification. Such a situation cannot occur in MC-PG, but can occur with both action-value based methods like Q-learning, and AC methods which use an approximate state-value function to estimate the policy-gradient. 

Despite the issues we highlight here, value-based methods can be very powerful as a means to reduce the variance from estimating the policy-gradient from returns alone, and enabling learning immediately from single transitions. For these reasons, it is important that we try to understand the issues with the current standard approaches, and how we might be able to improve upon them.

Methods for mitigating the issues we highlight could take a variety of forms. One possibility is to develop better representation learning methods which remove the kind of bad generalization that leads to pathological behaviors. Another, perhaps orthogonal, approach is to develop algorithms that are more sensitive to the dynamics between the value function and policy. By developing both these directions we can perhaps hope to design algorithms that make the most out of the current representation, while also improving the representation over time to further increase performance.


\bibliography{refs}

\begin{thebibliography}{22}
\providecommand{\natexlab}[1]{#1}
\providecommand{\url}[1]{\texttt{#1}}
\expandafter\ifx\csname urlstyle\endcsname\relax
  \providecommand{\doi}[1]{doi: #1}\else
  \providecommand{\doi}{doi: \begingroup \urlstyle{rm}\Url}\fi

\bibitem[Bellemare et~al.(2013)Bellemare, Naddaf, Veness, and
  Bowling]{bellemare2013arcade}
Bellemare, M.~G., Naddaf, Y., Veness, J., and Bowling, M.
\newblock The arcade learning environment: An evaluation platform for general
  agents.
\newblock \emph{Journal of Artificial Intelligence Research}, 47:\penalty0
  253--279, 2013.

\bibitem[Bertsekas(2010)]{bertsekas2010pathologies}
Bertsekas, D.~P.
\newblock Pathologies of temporal difference methods in approximate dynamic
  programming.
\newblock In \emph{49th IEEE Conference on Decision and Control (CDC)}, pp.\
  3034--3039. IEEE, 2010.

\bibitem[Bertsekas(2011)]{bertsekas2011approximate}
Bertsekas, D.~P.
\newblock Approximate policy iteration: A survey and some new methods.
\newblock \emph{Journal of Control Theory and Applications}, 9\penalty0
  (3):\penalty0 310--335, 2011.

\bibitem[Bertsekas \& Tsitsiklis(1996)Bertsekas and
  Tsitsiklis]{bertsekas1996neuro}
Bertsekas, D.~P. and Tsitsiklis, J.~N.
\newblock \emph{Neuro-dynamic programming}.
\newblock Athena Scientific, 1996.

\bibitem[Brouwer(1912)]{brouwer1911abbildung}
Brouwer, L. E.~J.
\newblock {\"U}ber abbildung von mannigfaltigkeiten.
\newblock \emph{Mathematische annalen}, 71\penalty0 (1):\penalty0 97--115,
  1912.

\bibitem[De~Farias \& Van~Roy(2000)De~Farias and Van~Roy]{de2000existence}
De~Farias, D.~P. and Van~Roy, B.
\newblock On the existence of fixed points for approximate value iteration and
  temporal-difference learning.
\newblock \emph{Journal of Optimization theory and Applications}, 105\penalty0
  (3):\penalty0 589--608, 2000.

\bibitem[Espeholt et~al.(2018)Espeholt, Soyer, Munos, Simonyan, Mnih, Ward,
  Doron, Firoiu, Harley, Dunning, et~al.]{espeholt2018impala}
Espeholt, L., Soyer, H., Munos, R., Simonyan, K., Mnih, V., Ward, T., Doron,
  Y., Firoiu, V., Harley, T., Dunning, I., et~al.
\newblock Impala: Scalable distributed deep-rl with importance weighted
  actor-learner architectures.
\newblock \emph{arXiv preprint arXiv:1802.01561}, 2018.

\bibitem[Gordon(1996)]{gordon1996chattering}
Gordon, G.~J.
\newblock Chattering in sarsa (lambda)-a cmu learning lab internal report.
\newblock 1996.

\bibitem[Gordon(2001)]{gordon2001reinforcement}
Gordon, G.~J.
\newblock Reinforcement learning with function approximation converges to a
  region.
\newblock In \emph{Advances in neural information processing systems}, pp.\
  1040--1046, 2001.

\bibitem[Jaakkola et~al.(1994)Jaakkola, Jordan, and
  Singh]{jaakkola1994convergence}
Jaakkola, T., Jordan, M.~I., and Singh, S.~P.
\newblock Convergence of stochastic iterative dynamic programming algorithms.
\newblock In \emph{Advances in neural information processing systems}, pp.\
  703--710, 1994.

\bibitem[Kakade \& Langford(2002)Kakade and Langford]{kakade2002approximately}
Kakade, S. and Langford, J.
\newblock Approximately optimal approximate reinforcement learning.
\newblock In \emph{ICML}, volume~2, pp.\  267--274, 2002.

\bibitem[Lu et~al.(2018)Lu, Schuurmans, and Boutilier]{lu2018non}
Lu, T., Schuurmans, D., and Boutilier, C.
\newblock Non-delusional q-learning and value-iteration.
\newblock In \emph{Advances in Neural Information Processing Systems}, pp.\
  9949--9959, 2018.

\bibitem[Maei et~al.(2010)Maei, Szepesv{\'a}ri, Bhatnagar, and
  Sutton]{maei2010toward}
Maei, H.~R., Szepesv{\'a}ri, C., Bhatnagar, S., and Sutton, R.~S.
\newblock Toward off-policy learning control with function approximation.
\newblock In \emph{ICML}, pp.\  719--726, 2010.

\bibitem[Munos(2003)]{munos2003error}
Munos, R.
\newblock Error bounds for approximate policy iteration.
\newblock In \emph{ICML}, volume~3, pp.\  560--567, 2003.

\bibitem[Pendrith \& McGarity(1998)Pendrith and McGarity]{pendrith1998analysis}
Pendrith, M.~D. and McGarity, M.
\newblock An analysis of direct reinforcement learning in non-markovian
  domains.
\newblock In \emph{ICML}, pp.\  421--429, 1998.

\bibitem[Perkins \& Pendrith(2002)Perkins and Pendrith]{perkins2002existence}
Perkins, T.~J. and Pendrith, M.~D.
\newblock On the existence of fixed points for q-learning and sarsa in
  partially observable domains.
\newblock In \emph{ICML}, pp.\  490--497, 2002.

\bibitem[Perkins \& Precup(2003)Perkins and Precup]{perkins2003convergent}
Perkins, T.~J. and Precup, D.
\newblock A convergent form of approximate policy iteration.
\newblock In \emph{Advances in neural information processing systems}, pp.\
  1627--1634, 2003.

\bibitem[Singh et~al.(2000)Singh, Jaakkola, Littman, and
  Szepesv{\'a}ri]{singh2000convergence}
Singh, S., Jaakkola, T., Littman, M.~L., and Szepesv{\'a}ri, C.
\newblock Convergence results for single-step on-policy reinforcement-learning
  algorithms.
\newblock \emph{Machine learning}, 38\penalty0 (3):\penalty0 287--308, 2000.

\bibitem[Tieleman \& Hinton(2012)Tieleman and Hinton]{RMSPROP}
Tieleman, T. and Hinton, G.
\newblock Lecture 6.5-{RMSProp}: Divide the gradient by a running average of
  its recent magnitude.
\newblock \emph{Coursera: Neural networks for machine learning}, 4\penalty0
  (2):\penalty0 26--31, 2012.

\bibitem[Tsitsiklis(1994)]{tsitsiklis1994asynchronous}
Tsitsiklis, J.~N.
\newblock Asynchronous stochastic approximation and q-learning.
\newblock \emph{Machine learning}, 16\penalty0 (3):\penalty0 185--202, 1994.

\bibitem[Tsitsiklis \& Van~Roy(1997)Tsitsiklis and
  Van~Roy]{tsitsiklis1997analysis}
Tsitsiklis, J.~N. and Van~Roy, B.
\newblock Analysis of temporal-diffference learning with function
  approximation.
\newblock In \emph{Advances in neural information processing systems}, pp.\
  1075--1081, 1997.

\bibitem[Williams(1992)]{williams1992simple}
Williams, R.~J.
\newblock Simple statistical gradient-following algorithms for connectionist
  reinforcement learning.
\newblock \emph{Machine learning}, 8\penalty0 (3-4):\penalty0 229--256, 1992.

\end{thebibliography}
\bibliographystyle{icml2020}
\end{document}